\documentclass{article} 
\usepackage{iclr2026_conference,times}


\usepackage{amsmath,amsfonts,bm}









\def\eqref#1{equation~\ref{#1}}









\def\1{\bm{1}}










\DeclareMathAlphabet{\mathsfit}{\encodingdefault}{\sfdefault}{m}{sl}
\SetMathAlphabet{\mathsfit}{bold}{\encodingdefault}{\sfdefault}{bx}{n}













\usepackage{hyperref}
\usepackage{url}

\usepackage{algorithm}
\usepackage{algorithmic}
\usepackage{amsthm}
\newtheorem{theorem}{Theorem}
\usepackage{makecell}
\usepackage{amssymb}
\usepackage{booktabs}
\usepackage{multirow}
\usepackage{caption}
\usepackage{subcaption}
\usepackage{graphicx}

\title{RS-ORT: A Reduced-Space Branch-and-Bound Algorithm for Optimal Regression Trees}

\author{Cristobal Heredia Galleguillos \\
Department of Industrial and Management Systems Engineering \\
University of South Florida \\
Tampa, FL, USA \\
\texttt{cristobalheredia@usf.edu} \\
\And
Pedro Chumpitaz-Flores \\
Department of Industrial and Management Systems Engineering \\
University of South Florida \\
Tampa, FL, USA \\
\texttt{pedrochumpitazflores@usf.edu} \\
\And
Kaixun Hua \\
Department of Industrial and Management Systems Engineering \\
University of South Florida \\
Tampa, FL, USA \\
\texttt{khua@usf.edu}
}

\iclrfinalcopy 
\begin{document}

\maketitle

\begin{abstract}
Mixed-integer programming (MIP) has emerged as a powerful framework for learning optimal decision trees. Yet, existing MIP approaches for regression tasks are either limited to purely binary features or become computationally intractable when continuous, large-scale data are involved. Naively binarizing continuous features sacrifices global optimality and often yields needlessly deep trees. We recast the optimal regression-tree training as a two-stage optimization problem and propose Reduced-Space Optimal Regression Trees (RS-ORT)—a specialized branch-and-bound (BB) algorithm that branches exclusively on tree-structural variables. This design guarantees the algorithm's convergence and its independence from the number of training samples. Leveraging the model’s structure, we introduce several bound tightening techniques—closed-form leaf prediction, empirical threshold discretization, and exact depth-1 subtree parsing—that combine with decomposable upper and lower bounding strategies to accelerate the training. The BB node-wise decomposition enables trivial parallel execution, further alleviating the computational intractability even for million-size datasets. Based on the empirical studies on several regression benchmarks containing both binary and continuous features, RS-ORT also delivers superior training and testing performance than state-of-the-art methods. Notably, on datasets with up to 2{,}000{,}000 samples with continuous features, RS-ORT can obtain guaranteed training performance with a simpler tree structure and a better generalization ability in four hours.
\end{abstract}

\section{Introduction}
Decision trees, known for their strong interpretability as a typical supervised learning algorithm, have been widely used across diverse fields  \cite{coscia_automatic_2024}, \cite{xia_application_2025}, \cite{mienye_survey_2024}. Since the initial development of early decision tree algorithms by \cite{morgan_problems_1963}, numerous heuristic methods have been introduced to approach the optimal solution, including ID3 \cite{quinlan_induction_1986}, C4.5 \cite{quinlan_improved_1996}, and CART \cite{breiman_classification_1984}. However, these heuristic approaches are inherently prone to producing sub-optimal solutions. In high-stakes fields such as medicine, where both accuracy and interpretability are crucial, the use of optimal decision trees is particularly valuable. These models provide transparent predictions that can be more easily validated and trusted by medical professionals \cite{rudin_stop_2019}.

Although a decision tree can be formulated as an optimization problem to find the most accurate and compact structure, solving it at optimality is computationally challenging —even for binary classification, since the task is NP-complete \cite{hyafil_constructing_1976}. Despite the computational complexity, recent studies have demonstrated the benefits of pursuing global optimality. For instance, \cite{bertsimas_optimal_2017} shows that globally optimal decision trees can achieve average absolute improvements in R\textsuperscript{2}. In this work, “optimal” refers to the globally optimal solution of a regularized empirical-risk minimization problem, where the objective combines mean-squared error with a penalty on the number of active internal nodes under a fixed maximum depth constraint. Note that the term “optimal” has been used with different meanings in prior work, such as zero-error trees, minimum-depth formulations, or regularized loss minimization. Our study focuses on the third class, where both accuracy and model structure are jointly optimized.

Decision trees can be broadly categorized into classification and regression trees, depending on the nature of the predictive task. Both types share a similar structure: they consist of branching nodes that recursively split the feature space, and leaf nodes that assign predictions. The key difference lies in the output of the leaf nodes, where classification trees output a discrete class label (typically an integer), whereas regression trees assign a continuous real-valued prediction. Regression trees pose unique challenges due to the continuous nature of their outputs. In particular, computing the optimal regression tree becomes significantly more complex than in the classification setting, as the search space grows with the granularity of the continuous target variable. 

In classification, multiple methods have been proposed to find the optimal solution. Using mixed-integer programming (MIP), several approaches have been developed over time \cite{verwer_learning_2017, verwer_learning_2019, gunluk_optimal_2019, aghaei_learning_2020, gunluk_optimal_2021, liu_bsnsing_2022, boutilier_optimal_2023, aghaei_strong_2024, liu_optimal_2024}. Some variants leverage mixed-integer quadratic programming (MIQP) \cite{donofrio_margin_2024, burgard_mixed-integer_2024}. Other strategies are inspired by itemset mining techniques \cite{nijssen_mining_2007, aglin_learning_2020}. Tailed branch-and-bound and dynamic programming have also been used to enhance scalability and performance \cite{hu_optimal_2019, lin_generalized_2020, mctavish_fast_2022, demirovic_murtree_2022, hua_scalable_2022}; \cite{Pascal_2024}; \cite{brita_optimal_2025}. More recent methods incorporate AND/OR graph search under a Bayesian inference framework \cite{sullivan_maptree_2024, chaouki_branches_2025}. Finally, satisfiability-based (SAT) formulations have been explored \cite{narodytska_learning_2018, avellaneda_efficient_2020, hu_learning_2020} \cite{janota_sat-based_2020}; \cite{shati_sat-based_2021}; \cite{Schidler_Szeider_2021}.

In contrast, optimal regression trees have received significantly less attention due to the added complexity of continuous outputs, which substantially increase the search space and limit the applicability of classification-based techniques. The formulation of optimal nonlinear regression trees as a mixed-integer program (MIP) was first introduced in~\cite{bertsimas2019ml}, where the authors proposed a method that guarantees globally optimal solutions. To improve scalability, a gradient descent-based method was later developed to efficiently find near-optimal solutions~\cite{bertsimas_near-optimal_2021}. \cite{zhang_optimal_2023} introduced a dynamic programming with bounds (DPB) algorithm capable of computing provably optimal sparse regression trees and can scale to datasets with 2 millions of samples, which is the current SOTA in the field of optimal regression trees. 
However, their approach requires fully binarized inputs or to transform continuous input into binary, which limits applicability to real-valued or high-cardinality features. Other methods make similar trade-offs. \texttt{evtree} \cite{grubinger_evtree_2014} uses evolutionary search but lacks guarantees. \cite{bos_piecewise_2024} propose dynamic programming methods for constant and linear regression trees, leveraging depth-two optimizations and per-instance cost precomputations to scale, though still constrained to binarized feature spaces. \cite{roselli_experiments_2025} use MIP to compute optimal model trees with SVM leaves, but restrict tree size and number of splits to ensure tractability.

In this work we overcome key scalability limitations in optimal regression-tree learning by introducing three core contributions. First, we formulate regression tree training as a two-stage optimization model capable of processing continuous features directly. Unlike methods that rely on feature discretization, our formulation operates directly on continuous features without preprocessing. This enables the construction of compact, interpretable trees that maintain competitive prediction accuracy. The first-stage variables encode only the tree structure (i.e., split features, thresholds, and leaf assignments), while the second stage captures sample-specific responses. By building on the reduced-space branch-and-bound (BB) framework of~\citet{cao_scalable_2019} we prove that branching \textit{exclusively} on the first-stage variables guarantees finite convergence, even though the second stage includes both binary and continuous decisions. This design ensures that the search space—and thus the memory footprint—remains \textit{independent} of the number of training samples, a property not shared by previous MIP formulations for regression trees.

Secondly, we develop a series of tight bounding mechanisms that turn this theoretical advantage into practical speed. For any fixed routing pattern, we derive closed‑form least‑squares predictions for each leaf, eliminating an entire block of continuous decision variables from the BB nodes (i.e., no need of branching on leaf label variables). We further discretize candidate split thresholds to the empirical feature values without loss of generality, which shrinks the domain of each branching variable. 

Finally, we fathom BB nodes in advance by determining all variables of the decision tree’s last depth‑1 subtrees through exact greedy search after the fix of upper level tree structure variables, which provide a stronger lower bound that accelerates the convergence. Combined with our decomposable bounds, the resulting algorithm is naturally parallelizable and scales to multi-million-row datasets with provable training quality guarantee.

On several benchmarks, RS‑ORT attains strictly lower training loss and better test RMSE than state‑of‑the‑art MIP and heuristic baselines. Remarkably, for a dataset with 2 million continuous samples RS‑ORT reaches a provably globally optimal tree—often two to three levels shallower than competing models—within four hours, all while reporting optimality gaps below $0.1$.

\section{Decomposable Optimal Regression Tree Problem} 
This work addresses the training of optimal decision trees for regression task involving datasets with continuous input features. The formulation presented here follows the skeleton of \cite{bertsimas2019ml} and~\cite{hua_scalable_2022}.

Let $\mathcal{N} = \{1, \dots, n\}$ and $\mathcal{P} = \{1, \dots, P\}$ denote the index sets of the $n$ observations and $P$ input features, respectively. Given a dataset $X = \{x_i \mid x_i \in \mathbb{R}^P, i \in \mathcal{N}\}$ and its corresponding response set $Y = \{y_i \mid y_i \in \mathbb{R}, i \in \mathcal{N}\}$. We aim to learn an optimal regression tree model $F : \mathbb{R}^P \rightarrow \mathbb{R}$ of maximum depth $D$, such that the following objective function is minimized:

\begin{equation}
\mathcal{L}(F) \;=\; \sum_{i \in \mathcal{N}} \ell\bigl(F(x_i),\, y_i\bigr) \;+\; \lambda\, R(F),
\label{eq:loss}
\end{equation}

Here, $\ell(\cdot,\cdot)$ denotes the loss function, which measures the discrepancy between predicted and true values. In this work, we use the mean squared error (MSE) as the loss function. $R(\cdot)$ is the regularization function that quantifies the complexity of the decision tree model. The parameter $\lambda > 0$ controls the trade-off between model fit and complexity: smaller values of $\lambda$ result in deeper trees by enabling splits with lower gain, while larger values restrict tree growth by requiring a greater reduction in loss to justify a new branch.

Let $t \in {1, \dots, T}$ be the index of a node in the tree, where $T = 2^{D+1} - 1$ is the total number of nodes for a tree of depth $D$. As in~\cite{bertsimas_regression_2017}, we define $p(t) = \lfloor t/2 \rfloor$ as the parent of node $t$, $A_L(t)$ as the set of ancestors whose \emph{left} branch is followed on the path from the root to $t$, and $A_R(t)$ as the set of ancestors whose \emph{right} branch is followed.

The set of nodes is partitioned into two disjoint subsets: the \emph{decision nodes} $\mathcal{T}_D = {1,\dots,\lfloor T/2 \rfloor}$ and the \emph{leaf nodes} $\mathcal{T}_L = {\lfloor T/2 \rfloor + 1,\dots, T}$.
Training the optimal regression tree can therefore be formulated as the following optimization problem:

\begin{subequations}\label{prob:main_reg}
   \begin{align}
    \min_{a,b,c,d} \quad & \sum_{i=1}^n \left(\frac{L_{i*}}{\hat{L}} + \frac{\lambda}{n} \sum_{t \in \mathcal{T}_D} d_t \right)\\
    \text{s.t.} \quad & L_{i*} \geq (f_i - y_i)^2, \forall i \in \mathcal{N} \label{con:reg_b}\\
    & -M_f(1 - z_{it}) \leq f_i - c_{t} \leq M_f(1 - z_{it}),\; \forall t \in \mathcal{T}_L, \forall i \in \mathcal{N} \label{con:reg_c}\\
    & \sum_{t \in \mathcal{T}_L} z_{it} = 1, \forall i \in \mathcal{N} \label{con:reg_d}\\
    & \mathbf{a}_m^\top (x_i + \mathbf{\epsilon} - \epsilon_{\min}) + \epsilon_{\min} \leq b_m + (1 + \epsilon_{\max})(1 - z_{it}), \;\forall m \in A_L(t),\, \;\forall t \in \mathcal{T}_L, \;\forall i \in \mathcal{N} \label{con:reg_e}\\
    & \mathbf{a}_m^\top x_i \geq b_m - (1 - z_{it}), \;\forall m \in A_R(t),\, \;\forall t \in \mathcal{T}_L, \;\forall i \in \mathcal{N} \label{con:reg_f}\\
    & \sum_{j=1}^P a_{jt} = d_t, \;\forall t \in \mathcal{T}_D \label{con:reg_g}\\
    & 0 \leq b_t \leq d_t, \;\forall t \in \mathcal{T}_D \label{con:reg_h}\\
    & d_t \leq d_{p(t)},\;\forall t \in \mathcal{T}_D \label{con:reg_i}\\
    & a_{jt}, d_t \in \{0,1\}, \;\forall t \in \mathcal{T}_D, \;\forall j\in \mathcal{P}\; \label{con:reg_j}\\
    & c_{t} \in \mathbb{R}, \;\forall t \in \mathcal{T}_L \label{con:reg_k}\\
    & z_{it} \in \{0,1\}, \;\forall t \in \mathcal{T}_L, \;\forall i \in \mathcal{N} \label{con:reg_l}\\
    & L_{i*}, f_i \in \mathbb{R}, \;\forall i \in \mathcal{N} \label{con:reg_m}
    \end{align} 
\end{subequations}

The structure of the regression tree is defined by variables \( a, b, c, d \).  Prior to optimization, all data input features are scaled to the interval \( [0, 1] \) to ensure consistent bounds across variables.  Each decision node \( t \in \mathcal{T}_D \) is associated with a binary variable \( d_t \in \{0,1\} \), which indicates whether the node performs a split (i.e., is active).  When active, the node selects a single feature via the binary vector \( \mathbf{a}_t = [a_{1t}, \dots, a_{Pt}]^\top \in \{0,1\}^P \), and applies a threshold \( b_t \in [0,1] \) to determine the branching condition. In Constraint~\ref{con:reg_e}, \(\boldsymbol{\epsilon}\) is a vector where each component \(\epsilon_j\) corresponds to the smallest nonzero difference between two consecutive sorted values of feature \(j\). From this, we define \(\epsilon_{\min}\) as the smallest of these gaps across all features, and \(\epsilon_{\max}\) as the largest.
Each leaf node \( t \in \mathcal{T}_L \) is assigned a continuous prediction value \( c_t \in \mathbb{R} \). To guarantee that each leaf node \(t\) produces a consistent prediction \(c_t\), Constraint~\ref{con:reg_c} enforces this relation. Here, \(f_i\) denotes the fitted value assigned by the regression tree to sample \(i\). The parameter \(M_f\) is a sufficiently large constant that bounds the difference \(f_i - c_t\). Since at optimality both the fitted values \(f_i\) and the leaf predictions \(c_t\) lie within the observed response range, we set \(M_f = \max_i y_i - \min_i y_i\).

To model the routing of samples through the tree, we introduce binary variables \( z_{it} \in \{0,1\} \), which indicate whether sample \( i \in \mathcal{N} \) is assigned to leaf node \( t \in \mathcal{T}_L \). A routing constraint Constraint~\ref{con:reg_d} ensures that each sample is assigned to exactly one leaf.  Based on this assignment, each sample receives a fitted value \( f_i \in \mathbb{R} \) corresponding to the prediction of its associated leaf. A key difference from \cite{bertsimas_regression_2017} is that, while their formulation enforces a minimum leaf size constraint, our model omits it and instead regulates tree complexity through the regularization term \(\tfrac{\lambda}{n}\sum_{t \in \mathcal{T}_D} d_t\). Hence, splits that yield only marginal error reduction are automatically discouraged when \(\lambda\) is large, making an explicit minimum leaf size unnecessary.

Finally, the squared error between the predicted value \( f_i \) and the ground truth \( y_i \) is upper bounded by an auxiliary variable \( L_{i*} \in \mathbb{R} \), as specified in Constraint~\ref{con:reg_b}. The value of $\hat{L}$ corresponds to the baseline mean squared error when predicting all training targets with their average.

\section{Size of the Search Space}
We quantify the number of distinct tree structures, that is, unique combinations of feature and split-threshold assignments across all internal nodes induced by a full binary tree of bounded depth. The bound below illustrates a combinatorial explosion in the search space even for shallow trees, which renders exhaustive enumeration impractical.
\begin{theorem}[Upper bound on distinct tree structures]\label{thm:search-space}
Let $n$ be the number of samples, $P$ the number of features, and $D$ the maximum depth.
Assume that each split threshold is chosen from among the distinct feature values present in the training data.
Then the number of distinct tree structures is at most
\[
N_{\mathrm{struct}}
\;\le\;
P^{\,2^D - 1}\;
\prod_{t=0}^{D-1}
\Bigl(\big\lfloor \tfrac{n}{2^t} \big\rfloor - 1 \Bigr)^{2^t}.
\]
\end{theorem}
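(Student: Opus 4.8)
The plan is to factor the number of distinct structures into a feature-selection factor and a threshold-selection factor, and to control the latter level by level. A full binary tree of depth $D$ has its internal nodes on levels $t = 0, \dots, D-1$, with exactly $2^t$ nodes on level $t$, and hence $\sum_{t=0}^{D-1} 2^t = 2^D - 1$ internal nodes in total. Each internal node independently selects one of the $P$ features on which to split, which contributes the factor $P^{\,2^D - 1}$; it then remains to bound the number of admissible thresholds, the key point being that this number decays geometrically as one descends the tree.

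First I would fix the per-node threshold count. By hypothesis every threshold equals one of the distinct values taken by the chosen feature among the samples that actually reach the node. If a node receives $m$ samples, that feature takes at most $m$ distinct values, so at most $m - 1$ thresholds placed strictly between consecutive sorted values yield genuinely different partitions; duplicate values and thresholds outside the observed range create no new structure. Thus a node seeing $m$ samples contributes at most $m - 1$ distinct thresholds.

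Next I would bound the population of a level-$t$ node. Every sample reaching level $t$ is routed to exactly one of the $2^t$ nodes there, so these populations partition a subset of the $n$ samples. Invoking a balanced-split assumption—each split forwards at most half of its incoming samples to either child—gives that a level-$t$ node receives at most $\lfloor n / 2^t \rfloor$ samples, hence at most $\lfloor n / 2^t \rfloor - 1$ thresholds. Multiplying over the $2^t$ nodes on level $t$ produces the level factor $(\lfloor n / 2^t \rfloor - 1)^{2^t}$, and taking the product over $t = 0, \dots, D-1$ and combining with $P^{\,2^D - 1}$ delivers the claimed bound.

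The main obstacle is exactly this population bound. For an \emph{arbitrary} tree the splits need not halve the data, and an unbalanced partition on a fixed level can push $\prod_j (m_j - 1)$ above $(\lfloor n / 2^t \rfloor - 1)^{2^t}$; already a $(3,2)$ split of $n = 5$ across two nodes gives $2 \cdot 1 = 2 > 1$. The clean rigorous route is to treat the node populations $m_j$ as nonnegative integers constrained only by $\sum_j m_j \le n$ and to maximize $\prod_j (m_j - 1)$ under that single constraint. By AM--GM the product is largest when the $m_j$ are as equal as possible: when $2^t \mid n$ this maximum is exactly $(\lfloor n / 2^t \rfloor - 1)^{2^t}$, and in general it is the marginally larger $(n / 2^t - 1)^{2^t}$, which still upper-bounds the count. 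Making the floor form hold verbatim therefore hinges on justifying the balanced-split reduction; once that per-level estimate is in hand, the remainder is routine bookkeeping of the product across levels.
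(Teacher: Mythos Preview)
Your decomposition into a feature factor $P^{2^D-1}$ and a level-by-level threshold factor, with the per-node count $n_u-1$ and the balance/AM--GM argument to control $\prod_u (n_u-1)$ under $\sum_u n_u=n$, is exactly the paper's proof. You have reconstructed it faithfully.

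You are also right to flag the floor. The paper's proof asserts, with no further justification, that because the product is maximized at the most balanced integer partition it is bounded by $(\lfloor n/2^t\rfloor-1)^{2^t}$. Your $(3,2)$ example at $n=5$, $2^t=2$ is a genuine counterexample to that step: the balanced partition gives $(3-1)(2-1)=2$, which strictly exceeds $(\lfloor 5/2\rfloor-1)^2=1$. What AM--GM actually delivers from $\sum_u n_u=n$ is the floor-free bound $(n/2^t-1)^{2^t}$ you wrote down, and when $2^t\nmid n$ this is larger than the stated expression. The ``balanced-split assumption'' you consider as a rescue (each child receives at most half the parent's samples) is not part of the hypotheses and fails for generic splits, so it does not close the gap either. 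In short, the obstacle you isolate is real and is present verbatim in the paper's own argument; the paper simply does not address it.
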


\begin{proof}
A full binary tree of maximum depth $D$ has $2^D-1$ internal nodes. Each internal node selects a feature $j\in\{1,\dots,P\}$ independently, yielding at most $P^{2^D-1}$ possible feature assignments.

Fix a level $t\in\{0,\dots,D-1\}$ and let $n_u$ be the number of samples reaching node $u$ at that level. For node $u$, the distinct thresholds induced by its local sample multiset are at most $(n_u-1)$ (between consecutive ordered values). Since there are $2^t$ nodes at level $t$ and $\sum_u n_u=n$, the product of threshold choices at that level is maximized when the $n_u$ are as balanced as possible; hence it is bounded by $\bigl(\lfloor n/2^t\rfloor-1\bigr)^{2^t}$. Taking the product over $t=0,\dots,D-1$ gives
\[
\prod_{t=0}^{D-1}\Bigl(\big\lfloor \tfrac{n}{2^t}\big\rfloor-1\Bigr)^{2^t}
\] 
as an upper bound on the number of threshold combinations for any fixed feature assignment.

Combining the $P^{2^D-1}$ feature choices with the bound on thresholds yields the stated bound in Theorem~\ref{thm:search-space}.

\end{proof}

For depth $D=2$ (three internal nodes),
\[
N_{\mathrm{struct}} \;\le\;
P^{3}\,(n-1)\,\Bigl(\,\big\lfloor \tfrac{n}{2} \big\rfloor - 1\Bigr)^{2},
\]

Considering a small dataset such as the \emph{Concrete} dataset \citep{concrete1998} ($n = 1030$, $P = 8$) and a depth limit of $D = 2$, Theorem~\ref{thm:search-space} yields
\[
N_{\mathrm{struct}} \;\le\;
P^{3}\,(n-1)
\Bigl(\,\big\lfloor\tfrac{n}{2}\big\rfloor-1\,\Bigr)^{2}
= 8^{3}\cdot1029\cdot514^{2}
= 139{,}191{,}134{,}208
\;\approx\; 1.39\times10^{11}.
\]

Even for this small dataset and shallow depth, the number of distinct feature–threshold assignment patterns 
already exceeds $10^{11}$, making exhaustive enumeration infeasible. This magnitude highlights the combinatorial explosion of possible structures and reinforces the need for a pruning-based optimization algorithm such as RS-ORT.

\section{Reduced-space Branch-and-Bound Algorithm}
Following the framework of~\cite{hua_scalable_2022} on large-scale optimal classification tree learning, we can reformulate Problem~\ref{prob:main_reg} as a two-stage optimization model: the first stage determines the tree structure variables (i.e., $a,b,c,d$), while the second stage accounts for sample-dependent variables, such as leaf assignment (i.e., $z$) and loss calculation (i.e., $f,L$).

Therefore, Problem~\ref{prob:main_reg} can be reformulated as solving the following optimization problem:

\begin{equation}
    f(M_0) = \min_{m \in M_0}\sum_{i\in\mathcal{N}} Q_i(m)
\end{equation}
where we denote $m = (a,b,c,d)$ as all first-stage variables and $Q_i(\cdot)$ represents the optimal value of the second-stage problem:
\begin{subequations}\label{eqn:sec_stg}
\begin{align}
    Q_i(m) &= \min_{z_i, L_{i*}} \frac{L_{i*}}{\hat{L}} + \frac{\lambda}{n}\sum_{t\in\mathcal{T}_D}d_t \\ 
    \rm{s.t.} \;\;& \mbox{Constraint}~\ref{con:reg_b}~-~\ref{con:reg_m}
\end{align}
\end{subequations}

Let \( M_0 := \{ m = (a, b, c, d) : m^{\mathrm{lb}} \leq m \leq m^{\mathrm{ub}} \} \) denote the box-constrained domain of the first-stage variables, where \( m^{\mathrm{lb}}, m^{\mathrm{ub}} \) denote lower and upper bounds respectively. At each node of the branch-and-bound tree, we consider a subregion \( M \subseteq M_0 \) and solve the restricted problem \( f(M) = \min_{m \in M} \sum_{i \in \mathcal{N}} Q_i(m) \).

The specific structure of the problem enables a reduced-space branch-and-bound (BB) algorithm, where branching occurs only on the first-stage variables (i.e., tree structure variables \( a, b, c, d \)), while still providing guarantees on the convergence of the algorithm (deferred to Appendix~\ref{apdx:alg_conv}). This design is critical for scalability without sacrificing global optimality, as it keeps the dimensionality of the branch-and-bound search space independent of the dataset size, in contrast to traditional formulations that introduce branching variables that scale with the number of samples.

At each BB node, lower bounds are computed by relaxing the non-anticipativity constraints, which allows to each sample to use a potentially different tree structure rather than enforcing a single shared tree for all samples~\cite{hua_scalable_2022}. This relaxation decouples the problem across samples and enables efficient parallel computation of the lower bound. In contrast, upper bounds are obtained via feasible solutions derived from heuristics or from the solutions of individual lower bounding subproblems. The decomposable nature of the master problem can enable the lower and upper bound calculation through a trivial parallelism to enhance the efficiency of the BB algorithm. Algorithm~\ref{alg:bb_sche} depicts the details of our BB algorithm.

\begin{algorithm}[t]
\caption{Reduced-space Branch and Bound \cite{hua_scalable_2022}} \label{alg:bb_sche}
\begin{algorithmic}[1]
\REQUIRE Initial region $M_0$, tolerance $\epsilon$
\STATE Set iteration index $t \leftarrow 0$, $\mathbb{M} \leftarrow \{M_0\}$
\STATE Initialize bounds: $\alpha_t \leftarrow \alpha(M_0)$, $\beta_t \leftarrow \beta(M_0)$
\WHILE{$\mathbb{M} \neq \emptyset$}
    \STATE Select $M \in \mathbb{M}$ with $\beta(M) = \beta_t$
    \STATE $\mathbb{M} \leftarrow \mathbb{M} \setminus \{M\}$, $t \leftarrow t + 1$
    \STATE Partition $M$ into $M_1$ and $M_2$ (see Appendix~\ref{apdx:bb_bch})
    \STATE Add $M_1$, $M_2$ to $\mathbb{M}$
    \FOR{$j \in \{1,2\}$}
        \STATE Compute $\alpha(M_j)$, $\beta(M_j)$
        \IF{$\beta_i(M_j)$ infeasible for some $i \in \mathcal{N}$}
            \STATE Remove $M_j$ from $\mathbb{M}$
        \ENDIF
    \ENDFOR
    \STATE $\beta_t \leftarrow \min \{\beta(M') \mid M' \in \mathbb{M}\}$
    \STATE $\alpha_t \leftarrow \min \{\alpha_{t-1}, \alpha(M_1), \alpha(M_2)\}$
    \STATE Remove $M'$ from $\mathbb{M}$ if $\beta(M') \geq \alpha_t$
    \IF{$|\beta_t - \alpha_t| \leq \epsilon$}
        \STATE \textbf{stop}
    \ENDIF
\ENDWHILE
\end{algorithmic}
\end{algorithm}

Algorithm~\ref{alg:bb_sche} outlines the reduced-space branch-and-bound (BB) procedure used in RS-ORT. Starting from the initial feasible region $M_0$, the algorithm iteratively partitions the search space into subregions $(M_1, M_2)$ defined over the tree-structure variables. For each subregion, lower and upper bounds are computed and infeasible or dominated regions are pruned. The process continues until the global lower and upper bounds $\alpha_t$ and $\beta_t$ converge within a specified tolerance $\epsilon$, at which point the algorithm terminates with a globally optimal solution.

The effectiveness of this strategy depends not only on computational efficiency but also on its ability to guarantee global optimality. Fortunately, theoretical results support this convergence. \cite{cao_scalable_2019} established that, for a broad class of two-stage problems with continuous recourse, branching solely on first-stage variables suffices to guarantee global convergence. Although the decision tree training problem involves both binary and continuous variables in the second stage, \cite{hua_scalable_2022} proved that the convergence guarantee remains valid for optimal classification trees. An analogous result holds for optimal regression tree problems.

\begin{theorem}\label{thm:bb_cvg}
    Algorithm~\ref{alg:bb_sche} converges to the global optimum by only branching on variable $a,b,c,d$, in the sense that
    \begin{equation}
        \lim \limits_{t\rightarrow\infty}\alpha_t = \lim \limits_{t\rightarrow\infty}\beta_t = f
    \end{equation}
\end{theorem}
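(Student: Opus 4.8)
The plan is to reduce the claim to the general convergence theorem for reduced-space branch-and-bound with decomposable recourse established by \cite{cao_scalable_2019} and extended to a mixed-integer second stage by \cite{hua_scalable_2022}: if (i) the first-stage domain $M_0$ is compact, (ii) each second-stage value $Q_i(m)$ is finite and attained for every fixed feasible $m$, (iii) the node lower bound $\alpha(M)$ underestimates $f(M)$ and its gap vanishes as the subregion collapses to a point, and (iv) the branching rule is exhaustive, then selecting the node of least lower bound (line 4 of Algorithm~\ref{alg:bb_sche}) forces $\lim_{t\to\infty}\alpha_t=\lim_{t\to\infty}\beta_t=f$. The entire proof therefore consists in verifying (i)--(iv) for Problem~\ref{prob:main_reg} in its two-stage form $f(M_0)=\min_{m\in M_0}\sum_{i\in\mathcal{N}}Q_i(m)$.

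First I would dispatch the structural assumptions (i)--(ii). Since every feature is scaled to $[0,1]$, the thresholds obey $0\le b_t\le 1$, the indicators $a_{jt},d_t$ lie in the finite set $\{0,1\}$, and the leaf values $c_t$ may be confined without loss of optimality to the observed response interval $[\min_i y_i,\max_i y_i]$ by the big-$M$ argument already used to set $M_f$; hence $M_0$ is a compact box. For any fixed first-stage point $m$ the tree is completely specified, so constraints~\ref{con:reg_e}--\ref{con:reg_f} route each sample to a unique leaf and $Q_i(m)$ reduces to a single squared residual (with the optimal $c_t$ available in closed form), which is finite and attained; nodes in which the box admits no consistent routing are fathomed by the infeasibility test in Algorithm~\ref{alg:bb_sche}. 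I would then record that the node lower bound is the non-anticipativity relaxation $\alpha(M)=\sum_{i\in\mathcal{N}}\min_{m_i\in M}Q_i(m_i)$, in which each sample may pick its own structure $m_i\in M$; because a common minimizer is always feasible for the decoupled problem, $\alpha(M)\le f(M)$, which supplies the validity half of (iii).

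The crux is the consistency half of (iii): I must show $\alpha(M)\to f(\{m^\ast\})$ whenever a nested sequence of subregions shrinks to a singleton $\{m^\ast\}$. Since $m^\ast\in M$ always gives $\min_{m_i\in M}Q_i(m_i)\le Q_i(m^\ast)$, it suffices to establish the reverse inequality in the limit, i.e. lower semicontinuity of each $Q_i$ along the collapsing boxes. Here the variables separate into easy and hard directions: the discrete indicators $a_{jt},d_t$ range over a finite set and are therefore pinned down by exhaustive branching after finitely many bisections, while $Q_i$ is continuous---indeed convex quadratic---in the leaf value $c_t$, so neither direction creates a persistent gap. The genuine obstacle, and the step I expect to require the most care, is the threshold $b_t$: $Q_i(\cdot)$ is \emph{discontinuous} in $b_t$ because sample $i$ switches leaves the instant $\mathbf{a}_t^\top x_i$ crosses $b_t$, producing a jump in its squared residual. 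My plan to overcome this is to invoke the empirical-threshold discretization: without loss of generality each $b_t$ may be restricted to the finitely many distinct values $\{\mathbf{a}_t^\top x_i : i\in\mathcal{N}\}$, while the strict-separation margins $\epsilon_{\min},\epsilon_{\max}$ in~\ref{con:reg_e}--\ref{con:reg_f} keep samples off the split boundaries. The effective first-stage set is then a finite point set, so exhaustive branching isolates $m^\ast$ after finitely many partitions; from that node on, $M=\{m^\ast\}$ forces $m_i=m^\ast$ for every $i$, the relaxation becomes exact, and $\alpha(M)=f(\{m^\ast\})$.

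Finally I would assemble the pieces: exhaustive branching, the vanishing bounding gap, and least-lower-bound node selection are exactly the hypotheses of the Cao--Hua convergence theorem, yielding $\lim_t\alpha_t=\lim_t\beta_t=f$. I would close by emphasizing why the binary recourse variables $z_{it}$ do not obstruct the argument: once the first-stage tree is fixed to a point, routing is deterministic and each $z_{it}$ is uniquely determined, so no second-stage integrality gap survives at a fixed $m$. This is the observation that lets the continuous-recourse theory of \cite{cao_scalable_2019} apply, mirroring the classification-tree analysis of \cite{hua_scalable_2022}.
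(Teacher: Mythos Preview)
Your proposal is correct and follows the same high-level scaffold as the paper: reduce to the reduced-space convergence framework of \cite{cao_scalable_2019}/\cite{hua_scalable_2022}, check that the first-stage domain is compact, and exploit the fact that once $m$ is fixed the second stage is fully determined (routing is unique, $c_t$ is the leaf mean, no residual integrality gap). Two differences are worth flagging.

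First, a notational slip: in the paper's convention (Algorithm~\ref{alg:bb_sche} and Appendix~\ref{apdx:alg_conv}) $\beta(M)$ is the node \emph{lower} bound obtained from the non-anticipativity relaxation and $\alpha(M)$ is the \emph{upper} bound from a feasible candidate. You have these reversed throughout; the logic survives, but the labels should be swapped before the argument is spliced in.

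Second, and more interestingly, you and the paper diverge on how to handle the threshold $b_t$. The paper's Appendix~\ref{apdx:alg_conv} argues that, after the binary components of $m$ are fixed, each $Q_i$ is a pointwise minimum of finitely many quadratics in the remaining continuous variables and therefore Lipschitz, then invokes Lemmas~3 and~6 of \cite{cao_scalable_2019} for lower- and upper-bound convergence separately. You instead identify the discontinuity of $Q_i$ in $b_t$ as the crux and resolve it by invoking the empirical-threshold discretization (Section~5.2), which renders the effective first-stage domain finite; exhaustive branching then isolates $m^\star$ in finitely many partitions and the relaxation becomes exact. Your route is more elementary---it yields finite termination rather than only asymptotic convergence and avoids the Lipschitz continuity claim entirely---whereas the paper stays closer to the continuous-domain machinery of \cite{cao_scalable_2019} at the price of a regularity assertion that is not fully justified there. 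Both arguments land on the same invocation of the Cao--Hua theorem, so the conclusions coincide.
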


The proof relies on the observation that all critical structural decisions are encoded in the first-stage variables. Once the tree structure is fixed, the second-stage decisions—such as sample allocation—can be optimally determined. A detailed proof is provided in Appendix~\ref{apdx:bb_thm}.

Despite the similarity of the BB framework to the optimal classification tree problem, many of the bound-tightening strategies used in that setting are not directly applicable to the regression counterpart. For instance,~\cite{hua_scalable_2022} introduces a sample reduction technique in the classification setting, which allows certain samples to be identified in advance and excluded from subproblem computations. However, this strategy is not applicable in the regression context. Consequently, we introduced new bound-tightening strategies tailored to the regression setting to improve the convergence rate of the proposed BB framework.

\section{Bound Tightening Strategies}
To further enhance the scalability of the reduced-space branch-and-bound (BB) algorithm applied to regression problems, we introduce three core improvements to its base formulation.

\subsection{Leaf Predictions as Implicit Variables}

In regression trees, each leaf \( t \in \mathcal{T}_L \) is associated with a continuous prediction value \( c_t \in \mathbb{R} \). Once the tree structure—namely the split directions \(A\), thresholds \(b\), and split indicators \(d\)—is fixed, each sample is routed along a unique path and assigned to a single, specific leaf based on its features. Under squared-error loss, and given a fixed tree structure and sample-to-leaf assignments, the optimal value of each \( c_t \) admits a closed-form solution. This enables the branch-and-bound algorithm to treat leaf predictions as implicit variables—computed only when needed—without sacrificing optimality.

\begin{theorem}[Closed-form optimality of leaf predictions]
\label{thm:leaf_closed_form}

Let \( \mathcal{S}_t = \{ i \in \mathcal{N} : z_{it} = 1 \} \) denote the set of samples assigned to leaf \( t \in \mathcal{T}_L \), and assume the routing variables \( z \) are fixed. Under the squared-error loss, the unique value of \( c_t \) that minimizes
\[
\sum_{i \in \mathcal{S}_t} (y_i - c_t)^2
\]
is given by the empirical mean of its targets:
\[
c_t^\star = \frac{1}{|\mathcal{S}_t|} \sum_{i \in \mathcal{S}_t} y_i.
\]

\end{theorem}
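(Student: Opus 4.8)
The plan is to recognize this as the classic least-squares constant-fit problem and solve it by the standard first-order condition, since the objective is a smooth, strictly convex quadratic in the single scalar $c_t$. I would treat the routing variables $z$ (hence the set $\mathcal{S}_t$) as fixed, which makes $\sum_{i \in \mathcal{S}_t}(y_i - c_t)^2$ a function of $c_t$ alone with all $y_i$ acting as constants.

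First I would define $g(c_t) = \sum_{i \in \mathcal{S}_t}(y_i - c_t)^2$ and differentiate with respect to $c_t$, obtaining $g'(c_t) = -2\sum_{i \in \mathcal{S}_t}(y_i - c_t) = 2\bigl(|\mathcal{S}_t|\,c_t - \sum_{i \in \mathcal{S}_t} y_i\bigr)$. Setting $g'(c_t)=0$ and solving yields the candidate stationary point $c_t^\star = \frac{1}{|\mathcal{S}_t|}\sum_{i \in \mathcal{S}_t} y_i$, the empirical mean. Second, I would confirm this is the unique global minimizer: the second derivative is $g''(c_t) = 2|\mathcal{S}_t| > 0$ whenever $\mathcal{S}_t$ is nonempty, so $g$ is strictly convex and the stationary point is its unique minimizer. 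An equivalent and perhaps cleaner route is to complete the square, writing $g(c_t) = |\mathcal{S}_t|\,(c_t - \bar{y})^2 + \sum_{i \in \mathcal{S}_t}(y_i - \bar{y})^2$ where $\bar{y}$ is the mean; this exhibits the minimizer and uniqueness simultaneously, since the second term is constant in $c_t$ and the first is minimized exactly at $c_t = \bar{y}$.

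This argument is essentially routine, so there is no serious mathematical obstacle; the only point requiring care is the edge case where a leaf receives no samples, i.e.\ $\mathcal{S}_t = \emptyset$. In that degenerate situation the sum is empty, the objective is identically zero, and $|\mathcal{S}_t| = 0$ makes the stated formula undefined; I would note that such empty leaves are either excluded by convention or assigned an arbitrary prediction without affecting the loss. Tying this back to the surrounding branch-and-bound framework, the practical payoff I would emphasize is that because $c_t^\star$ has this closed form once $z$ is fixed, the variables $c_t$ need never be branched on or carried as explicit decision variables in the search tree—they are recovered analytically at each node—which is precisely the elimination of a continuous variable block claimed in the bound-tightening discussion.
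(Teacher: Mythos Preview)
Your proposal is correct and follows essentially the same argument as the paper: define the sum-of-squares as a function of the scalar $c_t$, take the first derivative to locate the stationary point at the empirical mean, and verify uniqueness via the positive second derivative $2|\mathcal{S}_t|>0$. Your additional remarks (completing the square, the empty-leaf edge case, and the branch-and-bound implication) go slightly beyond the paper's proof but are consistent with it.
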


Proof can be found in Appendix~\ref{apdx:proof-error-decomp}. As a consequence, the prediction variables \( c_t \) can be excluded from the branch-and-bound decision space, since they admit closed-form computation once the routing variables are fixed. Instead of including \( c_t \) as decision variables, their values can be derived analytically at each node of the search tree after the routing has been determined. This decouples the prediction step from the combinatorial structure of the tree, simplifying the optimization problem.

This modeling choice offers two key advantages. First, it reduces the dimensionality of the search space, as removing the \( |\mathcal{T}_L| \) continuous variables from the branch-and-bound procedure leads to an exponential reduction in the number of nodes explored. Second, optimality is preserved: because \( c_t^\star \) is the unique minimizer for any fixed routing, deferring its computation does not affect the correctness of the solution or exclude any globally optimal configuration.

\subsection{Discretization of Threshold Variables}
Generally, if two consecutive sample values are \(v_{(k)} \le v_{(k+1)}\), then every \(b \in (v_{(k)}, v_{(k+1)})\) induces the same partition. Hence the only \emph{distinct} splits occur when \(b\) equals an observed feature value. Restricting each \(b_t\) to the empirical set \(\{x_{ij}\}_{i=1}^{n}\) is therefore \emph{loss-free}: it preserves every unique split while converting an uncountable domain into a finite one. For example, let a single feature column \(x_{\bullet j}\) take the (sorted) values \(0.18,\,0.23,\,0.41,\,0.55,\,0.55,\,0.77\). For any threshold \(b\) that falls strictly between \(0.23\) and \(0.41\), all observations satisfy either \(x_{ij} \le b\) or \(x_{ij} > b\) in exactly the same way. The split imposed on the dataset is therefore identical for, say, \(b = 0.24\) and \(b = 0.40\).

In our implementation of BB, each threshold variable \(b_t\) is associated with a dynamic lower bound \(b_t^l\) and upper bound \(b_t^u\), which delimit the current feasible interval for splitting. To determine a split point within this interval, we first identify the features that remain active at node \(t\). For each such feature, we inspect its sorted sample values and locate the first value that is not smaller than \(b_t^l\), and the last one that is not greater than \(b_t^u\). We then select the median index within this range to define the candidate split value \(b_t\).

If the interval still contains \emph{two or more} distinct sample values, the midpoint index produces a candidate \(\hat{b} \in \{x_{ij}\}\) that exactly bisects the remaining options. The BB tree then spawns two children: \(b_t \le \hat{b}\) (left branch) and \(b_t > \hat{b}\) (right branch). If the interval has shrunk to a \emph{single} sample value, further branching on \(b_t\) is unnecessary, so the \(b_t\) value becomes determined for that branch.

Because each branch removes at least one distinct feasible split, the discretised strategy guarantees progress and reduces the continuous search to a finite set of cases without impairing completeness or optimality.

\subsection{Closed-form Evaluation of Terminal Splits}

Consider a full binary tree of target depth \(D\). For any leaf at depth \(D\), denote its parent by \(P \in \mathcal{T}_{D-1}\) and its grandparent by \(G = p(p(P)) \in \mathcal{T}_{D-2}\).

Once the split variables \((A_G, b_G, d_G)\) of every grandparent \(G\) at depth \(D-2\) have been fixed, each sample is routed unambiguously to exactly one parent node \(P\). Consequently, the data matrices \(X_P = \{x_i : z_{iP}=1\}\) and \(Y_P = \{y_i : z_{iP}=1\}\) are fully determined and remain unchanged in every descendant BB node. By the decomposability result in Appendix~\ref{apdx:proof-error-decomp}, the contribution of a depth-\(D\) subtree rooted at \(P\) decomposes as
\[
\underbrace{\sum_{i \in P}(y_i - \hat{y}_P)^2}_{\text{parent as leaf}} - \left[\sum_{i \in L}(y_i - \hat{y}_L)^2 + \sum_{i \in R}(y_i - \hat{y}_R)^2\right],
\]
which depends only on samples assigned to \(P\). The decision to split \(P\) or keep it as a leaf can thus be made \emph{locally}.

For every parent node \(P\), we fit a regression tree of maximum depth one on the subset \((X_P, Y_P)\). This operation corresponds to training a CART model with \texttt{max\_depth = 1}, which guarantees that the selected split is globally optimal for trees of that depth, as it maximizes the reduction in error denoted by \(\Delta(P)\). We then compare this reduction \(\Delta(P)\) with a regularization penalty given by \(\lambda |P| / \hat{L}\), where \(|P|\) is the number of samples in node \(P\). If the gain \(\Delta(P)\) exceeds the penalty, we accept the split; otherwise, node \(P\) remains a leaf in the tree. Since data routed to \(P\) is fixed, this finalizes the entire depth-\(D\) decision beneath \(P\). No further branching is required for \(P\) or its children.

Resolving every parent node at depth \(D-1\) using the CART procedure described above has three key effects on the branch-and-bound (BB) search. First, it prunes entire branches of the BB tree, effectively removing two leaves per resolved parent node. Second, it tightens the upper bounds by evaluating the corresponding subtrees at their optimal configuration. Third, it preserves global optimality, since CART identifies the exact minimizer of the local objective at each parent node.

\section{Experiments}
\label{sec:experiments}

We evaluated our Reduced-Space Branch-and-Bound Optimal Regression Tree (RS-ORT) model on several regression datasets of varied scale. We used regression trees of fixed depth 2 (except where noted) to ensure comparability and computational feasibility across methods, and set time limits of 4 hours per dataset.

\begin{itemize}
  \item \textbf{Bertsimas \& Dunn (2019)} – We implemented the MIQP formulation from \cite{bertsimas2019ml}, using CPLEX 22.1.2 as solver. All parameters followed the original publication.
  
  \item \textbf{CART} – We used the \texttt{DecisionTreeRegressor} from \texttt{scikit-learn (v1.7.1)}, with \texttt{max\_depth} set to match RS-ORT for a fair comparison. CART was used both as a baseline and to generate warm-start solutions.

  \item \textbf{OSRT} – We evaluated the Optimal Sparse Regression Tree (OSRT) algorithm by \cite{zhang_optimal_2023} using their official Python implementation \url{https://github.com/ruizhang1996/optimal-sparse-regression-tree-public}.
\end{itemize}

We applied a consistent random sampling scheme across all methods to ensure comparable train/test splits. All models were subject to a fixed wall-clock time limit of four hours. Experiments with the RS-ORT method were conducted in high-performance computing environments, using between 40 and 200 physical CPU cores for most datasets. For large-scale datasets (over 100,000 observations), up to 1,000 cores were used. For OSRT (public implementation) and MIQP baseline supports multi-threading but not distributed-memory execution; since these baselines cannot aggregate memory across nodes, we ran them on the largest single-node machines available to us (20 threads, 128 GB RAM).

Our method was implemented in \texttt{Julia} and parallelized via multi-worker distributed execution, assigning one worker per physical core. The architecture enables RS-ORT to scale to massive datasets: we successfully solved the \emph{Household} dataset with over 2 million rows and continuous-valued features, achieving a provably optimal regression tree within four hours. Based on the existing literature, this is the first exact method capable of handling continuous features at this scale. 

\subsection{Datasets}
We benchmarked our algorithms on ten publicly available regression tasks drawn from the \emph{UCI Machine Learning Repository} \cite{abalone1995, airfoil2014, ccpp2014, concrete1998, microgasturbine2024, seoulbike2020, steel_industry_851, household2012}, \emph{OpenML} \cite{cpuact_openml}, and \emph{Kaggle} \cite{insurance_kaggle}. The collection spans a broad spectrum of domains—including civil engineering (\emph{Concrete}), power generation (\emph{CC Power Plant}), and urban demand forecasting (\emph{Seoul Bike}), among others—with sample sizes ranging from 414 to 2,075,259 observations and feature dimensionalities between 4 and 21. Overall, this suite provides a balanced mix of small-, medium-, and large-scale problems, heterogeneous target variables, and varying degrees of feature complexity, making it a robust test bed for evaluating optimal decision tree methods. 

\subsection{Test Performance}

Each method is trained on \(70\%\) of the data and evaluated on the remaining \(30\%\).
We report two kinds of quantities: 
(i) \emph{Train/Test RMSE}, computed as the unregularized root mean squared error on the training and testing splits, respectively; and 
(ii) \emph{Gap (\%)}, defined as the solver’s relative optimality gap on its own regularized training objective at termination. 
RMSE serves as a fair and comparable metric across methods, while \emph{Gap (\%)} quantifies convergence with respect to each solver’s internal objective. Notably, our objective function is trivially transferable to RMSE, ensuring consistent and fair evaluation across solvers.

Because all methods are deterministic optimization formulations solved under identical data splits and solver settings, the reported metrics are directly comparable and not subject to random variability. For this reason, we report exact RMSE and optimality gaps instead of statistical significance tests. As CART is a heuristic method without optimality guarantees, no gap is reported.

\begin{table*}[t]
\centering
\footnotesize
\setlength{\tabcolsep}{4pt} 
\caption{Performance on continuous datasets. “OoM” = out-of-memory error; “TOWT” = time-out without producing a valid tree.  “-” = not reported (CART has no gap and negligible runtime).}
\label{tab:test-results-continuous}
\begin{tabular}{cccccccc}
\toprule
Dataset & $n$ & $P$ & Method & Train RMSE & Test RMSE & Gap (\%) & Time (s) \\
\midrule
\multirow{4}{*}{Concrete } & \multirow{4}{*}{1,030} & \multirow{4}{*}{8}
  & RS-ORT      & \textbf{11.96} & \textbf{11.80} & \textbf{$<$0.01}  & 631.00  \\
 & & & Bertsimas  & \textbf{11.96} & \textbf{11.80} & \textbf{$<$0.01} & 10046.68 \\
 & & & OSRT       & \textbf{11.96} & \textbf{11.80} & \textbf{$<$0.01}  & \textbf{115.67} \\
 & & & CART       & 12.01 & 12.57 & --       & --    \\
\midrule
\multirow{4}{*}{Insurance } & \multirow{4}{*}{1,338} & \multirow{4}{*}{6}
  & RS-ORT      & 5168.28 & 4733.58 & $<$0.01   & 87.20   \\
 & & & Bertsimas  & 5168.28 & 4733.58 & $<$0.01  & 390.76 \\
 & & & OSRT       & 5168.28 & 4733.58 & $<$0.01   & \textbf{26.06}  \\
 & & & CART       & 5168.28 & 4733.58 & --        & --    \\
\midrule
\multirow{4}{*}{Airfoil Noise } & \multirow{4}{*}{1,503} & \multirow{4}{*}{5}
  & RS-ORT      & 5.39  & 5.26  & \textbf{$<$0.01}   & 196.00  \\
 & & & Bertsimas  & 5.39  & 5.26  & 50.90  & 1158.31 \\
 & & & OSRT       & 5.39  & 5.26  & \textbf{$<$0.01}   & \textbf{4.63}   \\
 & & & CART       & 5.39  & 5.26  & --        & --   \\
\midrule
\multirow{4}{*}{Abalone} & \multirow{4}{*}{4,177} & \multirow{4}{*}{8}
  & RS-ORT      & 2.56  & 2.52  & \textbf{$<$0.01}   & 6012.00 \\
 & & & Bertsimas  & 2.56  & 2.52  & 99.98       & 14400.00 \\
 & & & OSRT       & 2.56  & 2.52  & \textbf{$<$0.01}   & \textbf{910.64} \\
 & & & CART       & 2.56  & 2.52  & --        & --     \\
\midrule
\multirow{4}{*}{CPU ACT} & \multirow{4}{*}{8,192} & \multirow{4}{*}{21}
  & RS-ORT      & \textbf{5.99}  & 6.03  &  \textbf{8.08}  & 14400.00  \\
 & & & Bertsimas  & 6.01  & 6.03  & 100.00       & 14400.00 \\
 & & & OSRT       & OoM  & OoM  & OoM   & OoM \\
 & & & CART       & 6.01  & 6.03  & --        & --     \\
\midrule
\multirow{4}{*}{Seoul Bike} & \multirow{4}{*}{8,760} & \multirow{4}{*}{12}
  & RS-ORT        & 478.67  & 495.92  & \textbf{$<$0.01}   & \textbf{10116.00} \\
 & & & Bertsimas  & 478.67  & 495.92  & 100.00 & 14400.00 \\
 & & & OSRT       & 478.67    & 495.92    & \textbf{$<$0.01}   & 11606.78 \\
 & & & CART       & 478.67  & 495.92  & --      & --     \\
\midrule
\multirow{4}{*}{CC Power Plant} & \multirow{4}{*}{9,568} & \multirow{4}{*}{4}
  & RS-ORT      & \textbf{6.34}  & \textbf{6.32}  & \textbf{$<$0.01} & \textbf{3380.00}  \\
 & & & Bertsimas  & 6.35  & 6.36  & 98.90     & 14400.00 \\
 & & & OSRT       & OoM   & OoM   & OoM     & OoM     \\
 & & & CART       & 6.35  & 6.36  & --      & --     \\
\midrule
\multirow{4}{*}{Steel Industry} & \multirow{4}{*}{35,040} & \multirow{4}{*}{8}
  & RS-ORT      & \textbf{7.66}  & \textbf{7.61}  & \textbf{$<$0.01} & \textbf{1980.00}  \\
 & & & Bertsimas  & 8.42  & 8.33  & 100.00  & 14400.00 \\
 & & & OSRT       & OoM   & OoM   & OoM     & OoM      \\
 & & & CART       & 8.42  & 8.33  & --      & --     \\
\midrule
\multirow{4}{*}{Micro Gas Turbine} & \multirow{4}{*}{71,225} & \multirow{4}{*}{2}
  & RS-ORT        & 352.10  & 340.65  & \textbf{$<$0.01} & \textbf{1859.00} \\
 & & & Bertsimas  & 352.10 & 340.65 & 100.00     & 14400.00 \\
 & & & OSRT       & TOWT    & TOWT    & TOWT    & TOWT    \\
 & & & CART       & 352.10  & 340.65  & --      & --   \\
\midrule
\multirow{4}{*}{Household } & \multirow{4}{*}{2,049,280} & \multirow{4}{*}{5}
  & RS-ORT      & \textbf{0.29}  & \textbf{0.29}  & \textbf{33.47} & \textbf{14400.00} \\
 & & & Bertsimas  & OoM  & OoM  & OoM     & OoM    \\
 & & & OSRT       & TOWT   & TOWT   & TOWT      & TOWT     \\
 & & & CART       & 0.30  & 0.30  & --       & --    \\
\bottomrule
\end{tabular}
\end{table*}

As shown in Table~\ref{tab:test-results-continuous}, $n$ denotes the number of samples and $P$ the number of input features for each dataset.
In the table we report the results of our RS-ORT model, the MIQP formulation by \cite{bertsimas_regression_2017}, OSRT method, and CART on continuous benchmark datasets. This setting allowed us to verify that RS-ORT consistently achieved provably optimal solutions, as evidenced by the negligible training–testing optimality gaps (below 0.01\%) across multiple datasets. The results are consistent with those reported by other exact methods, supporting the reliability of our implementation. All methods were evaluated with a regularization parameter of 0.0005, chosen to ensure comparability by forcing all methods to grow full trees.

Bertsimas’s formulation exhibited large reported optimality gaps, which primarily reflect time-limited runs that did not close the MIP gap. This is an expected outcome given the exponential growth of full-space formulations as the number of samples increases.

OSRT encountered out-of-memory (OoM) errors, particularly on large-scale dataset with features that has a wide variety of possible values (e.g., the Steel Industry dataset). In some cases, it also failed to produce any valid solution within the allowed time, marked as TOWT (time-out without tree). In such cases, the method was not able to complete execution within the time wall and did not return a valid tree; therefore its performance metrics were thus omitted from the comparison. OSRT's observed OoM and TOWT outcomes are consistent with OSRT’s binary encoding on dense, high-cardinality continuous features, which can inflate the design of binary columns and exhaust memory or fail to return a valid tree within the time limit.

These limitations underscore the benefits of our reduced-space formulation, which achieved global optimality on continuous-valued features and demonstrated greater scalability across large and high-dimensional datasets.

\begin{figure*}[t]
    \centering
    \begin{subfigure}[b]{0.48\textwidth}
        \centering
        \includegraphics[width=\linewidth]{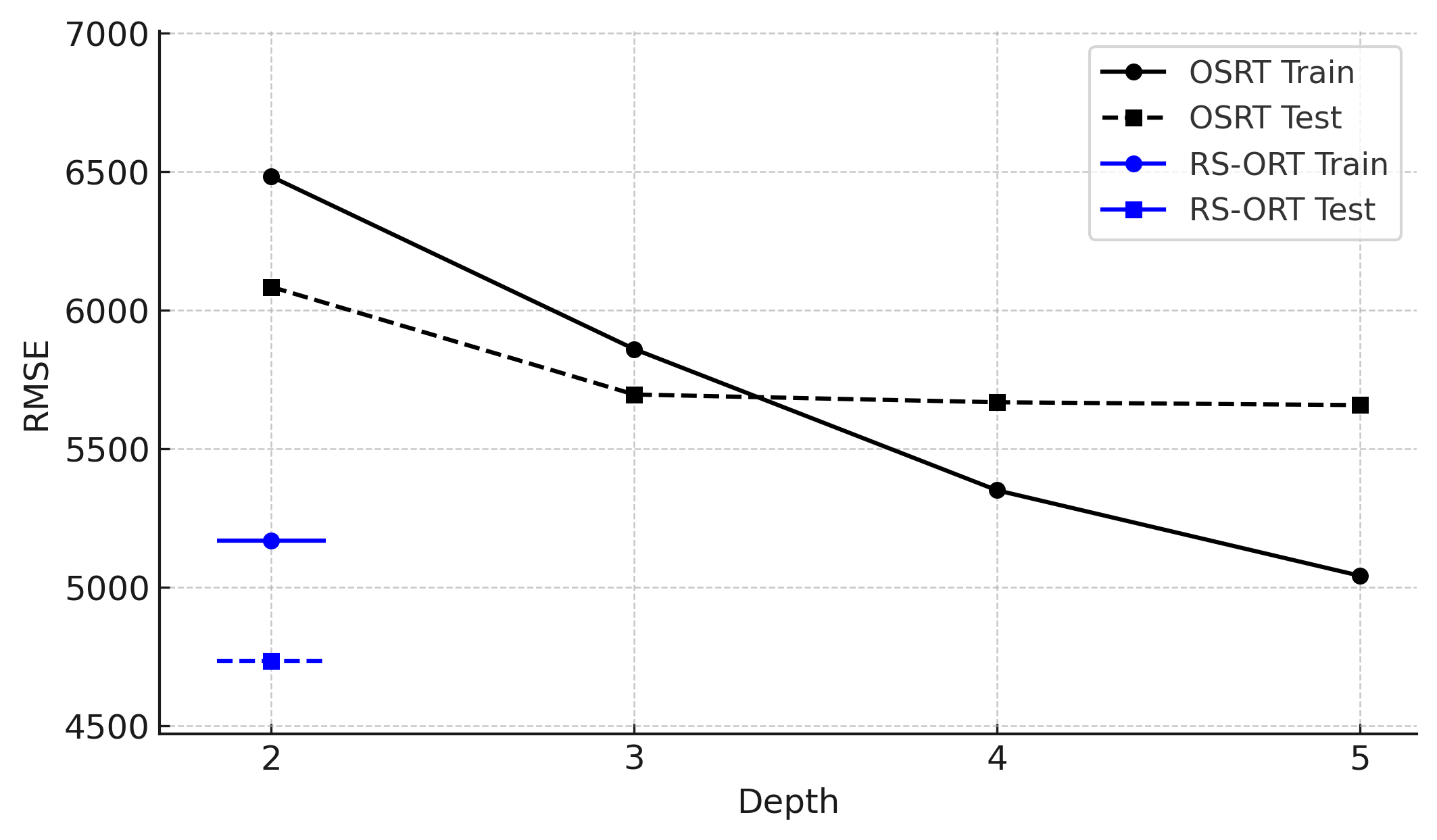}
        \caption{\emph{Insurance}}
        \label{fig:osrt_rs-ort_insurance}
    \end{subfigure}
    \hfill
    \begin{subfigure}[b]{0.48\textwidth}
        \centering
        \includegraphics[width=\linewidth]{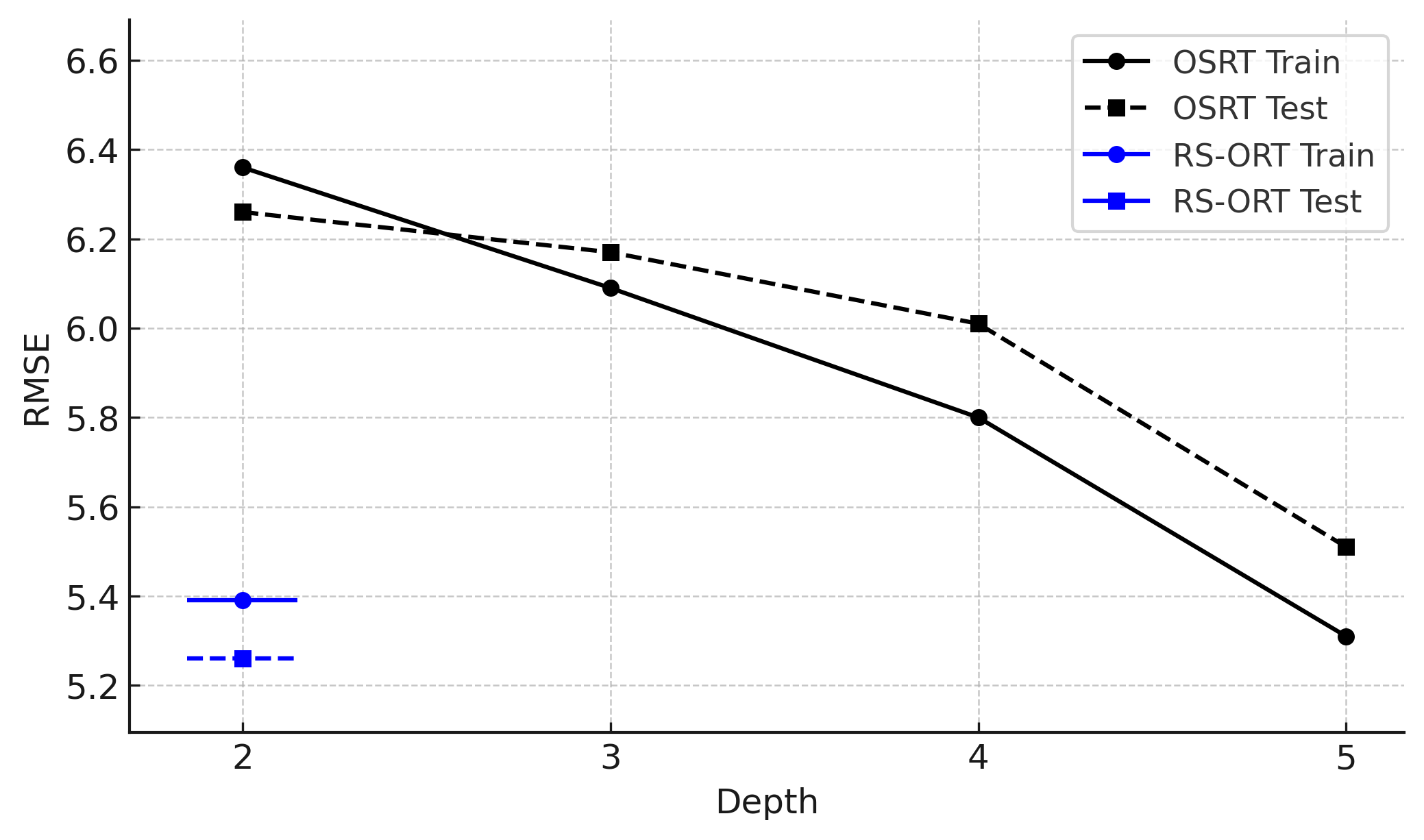}
        \caption{\emph{Airfoil Noise}}
        \label{fig:osrt_rs-ort_airfoil}
    \end{subfigure}

    \vspace{1em}

    \begin{subfigure}[b]{0.48\textwidth}
        \centering
        \includegraphics[width=\linewidth]{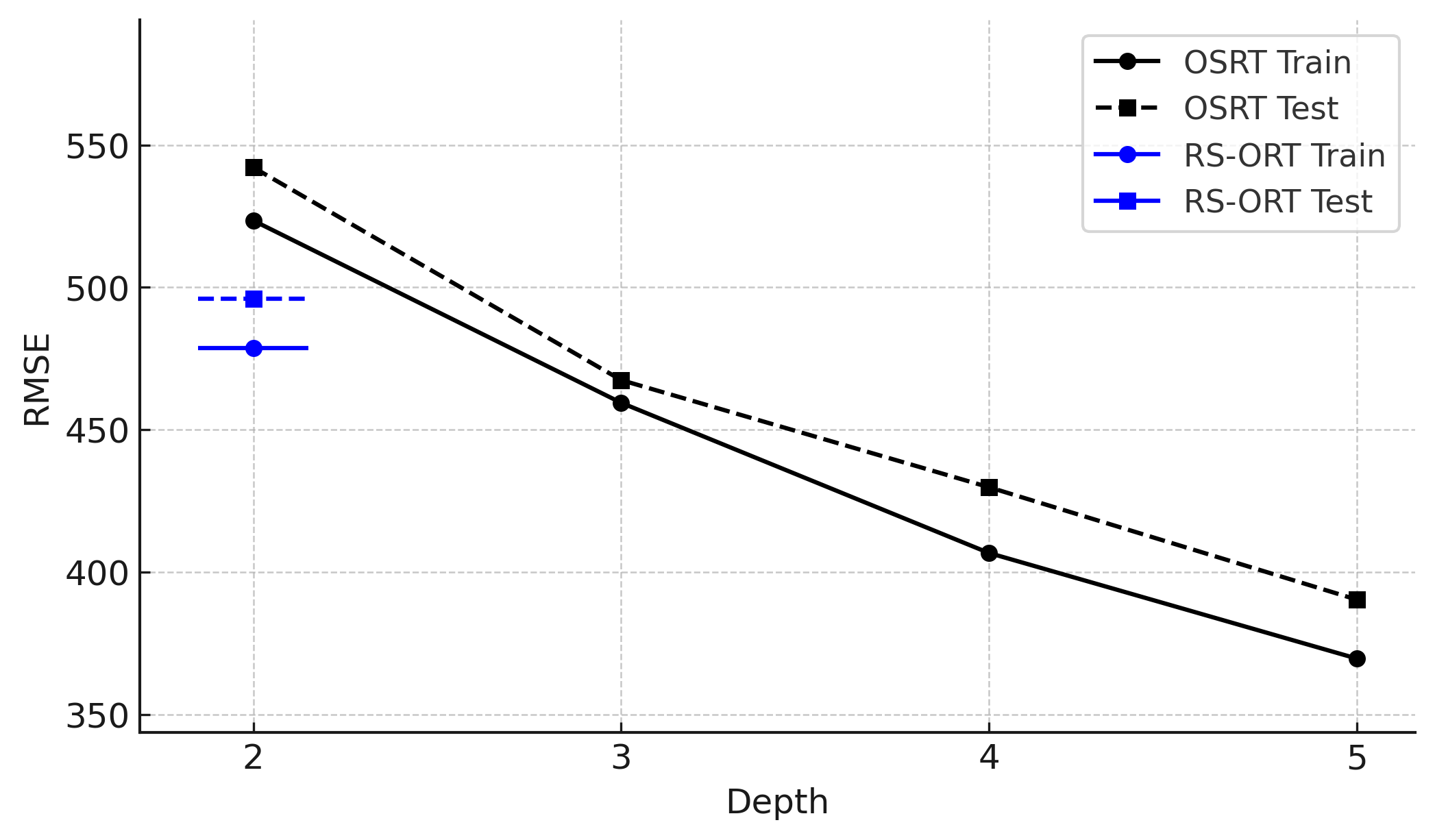}
        \caption{\emph{Seoul Bike}}
        \label{fig:osrt_rs-ort_seoul}
    \end{subfigure}
    \hfill
    \begin{subfigure}[b]{0.48\textwidth}
        \centering
        \includegraphics[width=\linewidth]{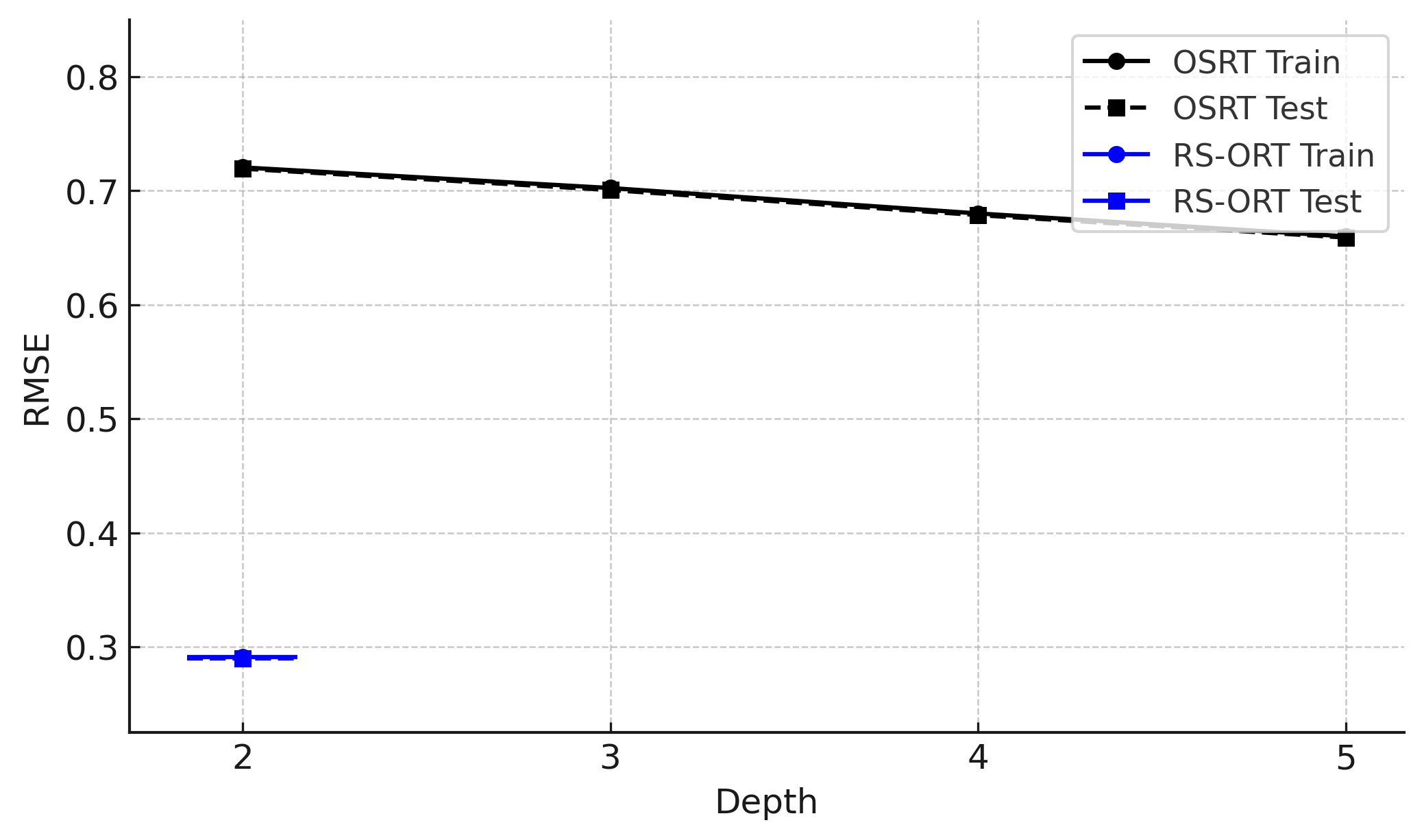}
        \caption{\emph{Household}}
        \label{fig:osrt_rs-ort_household}
    \end{subfigure}

    \caption{Train and test RMSE comparison between OSRT (binarized input) and RS-ORT (continuous input) across four datasets.}
    \label{fig:osrt_rs-ort_composite}
\end{figure*}

To complement our earlier analysis—which demonstrated RS-ORT’s strength on raw continuous features—we also measured how well it performs when its competitors are given their preferred binarised inputs. Specifically, we replicated the experimental protocol used by OSRT, whose formulation presupposes binary feature encodings. We selected the public datasets already distributed in binarised form by the OSRT Python package, then trained OSRT across tree depths 2 – 5 and over a hyper-parameter grid 
($\lambda \in {0.0001, 0.001, 0.005, 0.01, 0.05}$), retaining the model with the lowest training RMSE. RS-ORT, by contrast, was kept at a fixed depth 2 and consumed the original continuous features without any discretization.

Each plot in Figure~\ref{fig:osrt_rs-ort_composite} shows OSRT’s training and test RMSE across depths 2 to 5 (black lines with circle and square markers, respectively), while RS-ORT appears as a horizontal blue line at depth 2, with solid and dashed segments for training and test RMSE. In these experiments, each method is evaluated in its natural setting: OSRT uses its original binarized inputs as specified by its authors, whereas RS-ORT operates directly on the continuous features. This comparison clearly illustrates how RS-ORT behaves under its intended continuous formulation relative to a well-established binarized approach.

Despite OSRT’s structural advantage and access to deeper trees, RS-ORT performs remarkably well across all datasets. On \emph{Insurance}, RS-ORT achieves both lower training and test RMSE than any OSRT depth. On \emph{Airfoil Noise}, RS-ORT slightly outperforms OSRT in test RMSE and is close in training error. For \emph{Seoul Bike}, RS-ORT's performance is comparable to OSRT at depth 3. Finally, on \emph{Household}, RS-ORT clearly dominates, with a test RMSE less than half that of OSRT’s best configuration. These results demonstrate RS-ORT’s ability to generalize effectively while maintaining a compact and interpretable structure.

Overall, RS-ORT demonstrated superior performance across the majority of datasets, offering both strong generalization and optimality guarantees on continuous data, while reducing runtime and even complexity of models compared to existing exact methods.

\section{Conclusion}
This work introduces RS-ORT, a branch-and-bound algorithm designed for training optimal regression trees. By branching only on structural variables and computing leaf predictions in closed form, RS-ORT maintains a compact search space that scales independently of dataset size. To further improve efficiency, we discretize split thresholds over empirical feature values and resolve terminal splits using depth-1 CART models. These strategies allow our method to retain exactness while significantly reducing runtime. Our experiments show that RS-ORT consistently outperforms the MIQP-based baseline in both accuracy and solve time, achieving optimality on most datasets—even when the baseline fails to close the gap. On binarised benchmarks, RS-ORT reaches optimal solutions in minutes; however, we highlight that this comes at the cost of feature granularity. By handling continuous features directly, RS-ORT offers a more robust and interpretable alternative for regression tasks where precision and model transparency are essential.

\bibliography{cites}
\bibliographystyle{iclr2026_conference}
\newpage
\appendix
\begin{center}
    {\LARGE \textbf{Appendix}} \\[1.5ex]
\end{center}
\vspace{1em}
\section{Convergence Analysis of RS-ORT Algorithm}
\label{apdx:alg_conv}

We prove that the branch-and-bound (BB) scheme that \emph{branches only on the tree–structural variables} \(m=(a,b,c,d)\) converges to the global optimum of the RS-ORT algorithm.

\subsection{Two-stage reformulation}

Let the first-stage problem be
\[
f(M_0)\;=\;
  \min_{m\in M_0}\;
  \sum_{i\in\mathcal N} Q_i(m),
\]
where \(M_0\subset\mathbb R^{|m|}\) is the initial box for the structural variables \(m=(a,b,c,d)\).  

For fixed \(m\), the second-stage problem for sample \(i\) is
\[
Q_i(m)\;=\;
  \min_{z_{it},\,L_{i*}}
       \frac{L_{i*}}{\hat L}
       +\frac{\lambda}{n}\sum_{t\in\mathcal T_D}d_t
  \quad\text{s.t.\ (2b)–(2m).}
\]
This subproblem determines the optimal routing \(z_{it}\) and squared loss \(L_{i*}\) for sample \(i\) under a given tree structure \(m\).
The BB algorithm partitions subregions \(M\subseteq M_0\) in the reduced space of first-stage variables. The size of this space depends only on the tree depth \(D\) and number of features \(P\), and is therefore independent of the number of samples \(n\).
At each node \(M\), we compute a lower bound \(\beta(M)\) by relaxing the non-anticipativity constraints across samples, and an upper bound \(\alpha(M)\) by evaluating a feasible global solution from a candidate \(m\in M\). These bounds satisfy
\[
  \beta(M)\;\le\; f(M)\;\le\;\alpha(M).
\]
Branching proceeds by fixing or narrowing the domain of one coordinate in \(m\), producing two disjoint children \(M_1, M_2\) such that \(\operatorname{diam}(M_j)<\operatorname{diam}(M)\), where \(\operatorname{diam}(M)\) denotes the diameter of region \(M\). This ensures finite subdivision and prevents cycles.

\subsection{Convergence of the global bounds}

We now prove that the bounds computed at each iteration converge to the global optimum \(f^\star := f(M_0)\).

\paragraph{Convergence of the lower bound \(\beta_t\).}
At iteration \(t\), let \(\mathcal M_t\) denote the set of active regions and define the global lower bound as \(\beta_t := \min_{M \in \mathcal M_t} \beta(M)\).  
Because the optimal solution \(m^\star \in M_0\) always remains in some active region \(M^\star_t \in \mathcal M_t\), we have \(\beta_t \le \beta(M^\star_t) \le f(M^\star_t) = f^\star\).  
Moreover, the lower-bounding operator used in our algorithm is \emph{strongly consistent} in the sense of~\cite[Def.~4]{cao_scalable_2019}, and branching ensures an exhaustive partition of the domain. By Lemma 3 of~\cite{cao_scalable_2019} (extended by~\cite{hua_scalable_2022}), this implies
\[
  \lim_{t \to \infty} \beta_t = f^\star.
\]

\paragraph{Convergence of the upper bound \(\alpha_t\).}
The global upper bound is updated at each iteration as the minimum objective value among all feasible solutions found so far:
\[
\alpha_t := \min_{k \le t}\;\sum_i Q_i(m_k),
\]
where \(m_k\) is any candidate tree structure explored. Once all binary variables in \(m = (a,b,c,d)\) are fixed, only the continuous thresholds remain, and each subproblem \(Q_i(m)\) becomes the pointwise minimum of finitely many quadratic functions. Hence each \(Q_i(m)\) is Lipschitz-continuous in the continuous variables, and so is the global objective. By Lemma 6 of~\cite{cao_scalable_2019}, shrinking partitions around \(m^\star\) yield feasible evaluations approaching the optimum, which ensures
\[
  \lim_{t \to \infty} \alpha_t = f^\star.
\]

\paragraph{Global convergence.}
The sequences \(\beta_t\) and \(\alpha_t\) are monotone by construction: \(\beta_t\) is non-decreasing, \(\alpha_t\) is non-increasing, and always \(\beta_t \le \alpha_t\). Combining the two limits above, we conclude
\[
  \boxed{
    \lim_{t\to\infty} \beta_t = 
    \lim_{t\to\infty} \alpha_t = 
    f^\star.
  }
\]
Therefore, the reduced-space BB algorithm terminates when the gap \(|\alpha_t - \beta_t|\) falls below a user-defined tolerance \(\varepsilon\), returning an \(\varepsilon\)-optimal solution. Since all branching is performed over the fixed-size space of tree-structural variables \(m\), convergence is guaranteed independently of the number of training samples \(n\).

\section{Optimality of Second-Stage Decisions Given a Fixed Tree Structure}
\label{apdx:bb_thm}
We prove that once the tree structure \(m = (a, b, c, d)\) is fixed, the second-stage decisions—namely, the routing of samples and the computation of their associated losses—can be determined optimally and independently for each sample.
\subsection{Problem Setting}
Given a fixed tree structure \(m\), all split variables, thresholds, and active nodes are known. In this setting, the first-stage decisions are no longer optimization variables; they define a deterministic routing function from features \(x_i\) to terminal leaves \(t\). The goal is to solve the following second-stage problem:
\[
f(m) \;=\; \sum_{i\in\mathcal N} Q_i(m),
\]
where each subproblem \(Q_i(m)\) for sample \(i\) is defined as
\[
Q_i(m)\;=\;
  \min_{z_{it},\,L_{i*}}
       \frac{L_{i*}}{\hat L}
       +\frac{\lambda}{n}\sum_{t\in\mathcal T_D}d_t
  \quad\text{s.t.\ (2b)–(2l).}
\]
\subsection{Decoupling Across Samples}
Once \(m\) is fixed, all branching logic becomes static. The routing conditions for each sample \(i\) reduce to a deterministic set of logical rules based solely on the features \(x_i\). This implies that constraints (2b)–(2k) only involve sample-local variables \((z_{it}, L_{i*})\), and can be evaluated without reference to any other sample \(j \ne i\).
Consequently, the second-stage problem decomposes into \(n\) independent subproblems, one per sample. Each subproblem involves selecting the unique terminal leaf \(t\) to which sample \(i\) is routed, setting \(z_{it} = 1\) for that leaf and \(z_{it'} = 0\) for all others, and computing the loss \(L_{i*} = (y_i - c_t)^2\), where \(c_t\) is the prediction assigned to that leaf.
\subsection{Existence and Optimality of Local Solutions}
Because each subproblem \(Q_i(m)\) involves a finite set of routing options and a quadratic loss (which is convex in \(c_t\)), it admits a closed-form solution when \(c_t\) is allowed to vary. In particular, if the prediction \(c_t\) is optimized jointly after assigning samples, then the optimal value is the mean of the responses \(y_i\) for all \(i\) assigned to leaf \(t\). Alternatively, if \(c_t\) is fixed as part of \(m\), then the loss is directly computable.
In both cases, the subproblem is either trivially determined (by logical routing and evaluation) or solvable exactly (by closed-form minimization over a convex function on a fixed set). Hence, for any given \(m\), the optimal second-stage decisions for all samples can be computed exactly and independently.
\subsection{Conclusion}
We conclude that fixing the first-stage variables \(m = (a, b, c, d)\) reduces the RS-ORT problem to \(n\) fully separable and tractable subproblems, one for each sample. The global objective value \(f(m) = \sum_i Q_i(m)\) can therefore be computed exactly. This validates the correctness of the reduced-space formulation and confirms that second-stage optimality is preserved under structural decomposition.

\section{Branching Strategy for Algorithm}\label{apdx:bb_bch}

The branching strategy in Algorithm~\ref{alg:bb_sche} follows a strict variable priority to guide the search efficiently while preserving the structure of the tree. The order of branching is:

\begin{enumerate}

    \item \textbf{Split decisions ($d$):} These binary variables determine whether each internal node splits or becomes a leaf. The algorithm always branches on the first unfixed $d$-variable corresponding to a parent node (i.e., among the first $2^{D-1} - 1$ nodes). This ensures that the overall skeleton of the tree is fixed early.
    
    \item \textbf{Split features ($a$):} Once all relevant $d$-variables are fixed, the algorithm proceeds to assign split features. It selects the first grandparent unfixed $a_{j,t}$ variable, preferring nodes closer to the root.
    
    \item \textbf{Split thresholds ($b$):} If all $a$ grandparent variables for branching nodes are assigned, the algorithm branches on continuous threshold variables. For each node $t$, it computes a weighted uncertainty score:
    \[
    \Delta_t = (b_t^u - b_t^l) \cdot w_t,
    \]
    where $b_t^l$ and $b_t^u$ denote the current lower and upper bounds of the feasible interval for threshold $b_t$, and $w_t$ is a weight that decreases with the depth of node $t$ (i.e., higher weights near the root). Among all internal nodes, the $b_t$ with the largest $\Delta_t$ is selected.

\end{enumerate}

The leaf predictions $c$ are excluded from the branching process. Once the entire tree structure is determined (i.e., all $(d, a, b)$ variables are fixed), the values $c_t$ can be computed exactly via closed-form expressions.

\section{Proof of Error Decomposability}
\label{apdx:proof-error-decomp}
Consider a tree model with one parent node \(P\) and two child nodes \(L\) and \(R\). 
The global error for this tree can be written as:
\[
\sum_{i \in P}(y_i - \hat{y}_P)^2 
- \biggl[\sum_{i \in L}(y_i - \hat{y}_L)^2 + \sum_{i \in R}(y_i - \hat{y}_R)^2\biggr],
\]
where \(y_i\) is the target value of the \(i\)-th observation, and \(\hat{y}_L\), \(\hat{y}_R\), \(\hat{y}_P\) are the corresponding mean predictions in regions \(L\), \(R\), and \(P\), respectively.

Because each summation is structurally similar, it suffices to show that one of them is decomposable in order to conclude that the entire sum is decomposable. Thus, we focus on proving that \(\sum (y_i - \hat{y})^2\) can be decomposed appropriately.

\textbf{Theorem (Decomposability of \(\sum (y_i - \hat{y})^2\)).}
Let \(\{y_1, y_2, \ldots, y_n\}\) be a set of \(n\) real values, and let 
\(\hat{y} = \tfrac{1}{n}\sum_{i=1}^{n}y_i\). Then
\[
\sum_{i=1}^{n} (y_i - \hat{y})^2 = \sum_{i=1}^{n}y_i^2 - n\,\hat{y}^2.
\]
Moreover, this quantity can be decomposed into subgroups of data.\\
\textit{Proof.}
First, expand the square:
\[
\sum_{i=1}^{n} (y_i - \hat{y})^2
= \sum_{i=1}^{n} \bigl(y_i^2 - 2\,y_i\,\hat{y} + \hat{y}^2\bigr).
\]
Distributing the summation, we obtain:
\[
\sum_{i=1}^{n} y_i^2 - 2\sum_{i=1}^{n} (y_i\,\hat{y}) + \sum_{i=1}^{n} \hat{y}^2.
\]
Since \(\hat{y}\) is a constant, we get:
\[
\sum_{i=1}^{n} y_i^2 - 2\,\hat{y}\sum_{i=1}^{n} y_i + n\,\hat{y}^2.
\]
Recall that \(\hat{y} = \tfrac{1}{n}\sum_{i=1}^{n}y_i\), so:
\[
\sum_{i=1}^{n} y_i = n\,\hat{y}.
\]
Substitute this into the expression:
\[
\sum_{i=1}^{n} y_i^2 - 2\,n\,\hat{y}^2 + n\,\hat{y}^2
= \sum_{i=1}^{n} y_i^2 - n\,\hat{y}^2.
\]
Thus:
\[
\sum_{i=1}^{n} (y_i - \hat{y})^2 = \sum_{i=1}^{n} y_i^2 - n\,\hat{y}^2.
\]
\textbf{Remark.}
Because \(\sum_{i=1}^{n} y_i^2\) and \(n\,\hat{y}^2\) decompose naturally over disjoint groups of data, the same holds for 
\(\sum_{i=1}^{n} (y_i - \hat{y})^2\). If data is partitioned into \(m\) groups \(\{G_1, G_2, \ldots, G_m\}\), then the global sum of squares decomposes into the sums within each group.

\section*{Proof of closed-form optimality of leaf predictions}
\label{apdx:proof-closed-form-leaf}
\begin{theorem}[Closed-form optimality of leaf predictions]

Let \( \mathcal{S}_t = \{ i \in \mathcal{N} : z_{it} = 1 \} \) denote the set of samples assigned to leaf \( t \in \mathcal{T}_L \), and assume the routing variables \( z \) are fixed. Under the squared-error loss, the unique value of \( c_t \) that minimizes
\[
\sum_{i \in \mathcal{S}_t} (y_i - c_t)^2
\]
is given by the empirical mean of its targets:
\[
c_t^\star = \frac{1}{|\mathcal{S}_t|} \sum_{i \in \mathcal{S}_t} y_i.
\]

\end{theorem}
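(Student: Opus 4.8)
The plan is to treat the objective $g(c_t) = \sum_{i \in \mathcal{S}_t}(y_i - c_t)^2$ as a smooth univariate function of the single scalar $c_t$ (the routing $z$, and hence the index set $\mathcal{S}_t$, being fixed) and to exploit its strict convexity, which simultaneously delivers existence, the explicit form of the minimizer, and uniqueness.

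First I would carry out a first-order analysis: differentiate $g$ with respect to $c_t$ to obtain $g'(c_t) = -2\sum_{i \in \mathcal{S}_t}(y_i - c_t)$ and set the derivative to zero. Solving the resulting linear stationarity condition $\sum_{i \in \mathcal{S}_t} y_i = |\mathcal{S}_t|\,c_t$ yields the candidate $c_t^\star = \tfrac{1}{|\mathcal{S}_t|}\sum_{i \in \mathcal{S}_t} y_i$. This recovers the claimed empirical mean directly.

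Next I would confirm that this stationary point is the global minimizer and that it is unique. The cleanest route — and the one most consistent with the decomposability identity already established in Appendix~\ref{apdx:proof-error-decomp} — is to complete the square. Writing $\bar{y} := c_t^\star$, I would expand
\[
\sum_{i \in \mathcal{S}_t}(y_i - c_t)^2 = \sum_{i \in \mathcal{S}_t}(y_i - \bar{y})^2 + 2(\bar{y}-c_t)\sum_{i\in\mathcal{S}_t}(y_i - \bar{y}) + |\mathcal{S}_t|\,(\bar{y} - c_t)^2,
\]
and observe that the cross term vanishes because $\sum_{i \in \mathcal{S}_t}(y_i - \bar{y}) = 0$ by the definition of the mean. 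Since the first summand is independent of $c_t$ and the last summand is nonnegative, vanishing exactly when $c_t = \bar{y}$, this identity exhibits $c_t^\star$ as the unique global minimizer in a single line. As a brief alternative confirmation, I would note that $g''(c_t) = 2|\mathcal{S}_t| > 0$ whenever $\mathcal{S}_t \neq \emptyset$, so $g$ is strictly convex and its unique stationary point is necessarily its global minimum.

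The argument is elementary, so there is no genuine technical obstacle; the only point that merits care is establishing \emph{uniqueness} rather than merely stationarity, since the theorem statement asserts the minimizer is unique. Both the completion-of-the-square identity and the strict-convexity observation settle this cleanly, with the former having the advantage of reusing the sum-of-squares decomposition machinery already present in the paper. I would therefore present the completion-of-the-square version as the main proof and relegate the derivative computation to a one-line remark.
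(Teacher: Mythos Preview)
Your proposal is correct and essentially matches the paper's own proof: the paper differentiates $f(c)=\sum_{i\in\mathcal{S}_t}(y_i-c)^2$, solves the stationarity condition to obtain the sample mean, and then invokes $f''(c)=2|\mathcal{S}_t|>0$ for strict convexity and hence uniqueness---exactly the derivative route you sketch. Your completion-of-the-square argument is a perfectly valid alternative packaging, but the paper does not use it and relies solely on the first/second-derivative computation you already include.
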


\begin{proof}
Define the function
\[
f(c) = \sum_{i \in \mathcal{S}_t} (y_i - c)^2.
\]
We differentiate with respect to $c$:
\[
f'(c) = \sum_{i \in \mathcal{S}_t} 2(c - y_i)
= 2\Bigl(|\mathcal{S}_t|\,c - \sum_{i \in \mathcal{S}_t} y_i\Bigr).
\]
Setting the derivative equal to zero yields
\[
|\mathcal{S}_t|\,c = \sum_{i \in \mathcal{S}_t} y_i,
\]
and therefore
\[
c = \frac{1}{|\mathcal{S}_t|} \sum_{i \in \mathcal{S}_t} y_i.
\]
Finally, since
\[
f''(c) = 2|\mathcal{S}_t| > 0,
\]
the function is strictly convex, so this critical point is the unique minimizer.
\end{proof}

\end{document}